\newtheorem{theorem}{Theorem}
\newtheorem{proposition}[theorem]{Proposition}
\newtheorem{corollary}[theorem]{Corollary}
\newtheorem{remark}[theorem]{Remark}
\title{Clarification as Supervision: Reinforcement Learning for Vision-Language Interfaces}
\title{Clarification as Supervision: Reinforcement Learning for Vision-Language Interfaces}
\author{
  \hspace{-0.08cm}John Gkountouras \\
  ILLC, University of Amsterdam \\
  \texttt{i.gkountouras@uva.nl}
  \And
  Ivan Titov \\
  ILLC, University of Amsterdam \\
  ILCC, University of Edinburgh \\
  \texttt{ititov@inf.ed.ac.uk}
}
\begin{document}
\maketitle

\begin{abstract}
Recent text-only models demonstrate remarkable mathematical reasoning capabilities. Extending these to visual domains requires vision-language models to translate images into text descriptions. However, current models, trained to produce captions for human readers, often omit the precise details that reasoning systems require.
This creates an interface mismatch: reasoners often fail not due to reasoning limitations but because they lack access to critical visual information.
We propose Adaptive-Clarification Reinforcement Learning (AC-RL), which teaches vision models what information reasoners need through interaction. Our key insight is that clarification requests during training reveal information gaps; by penalizing success that requires clarification, we create pressure for comprehensive initial captions that enable the reasoner to solve the problem in a single pass.
AC-RL improves average accuracy by $4.4$ points over pretrained baselines across seven visual mathematical reasoning benchmarks, and analysis shows it would cut clarification requests by up to $39$\% if those were allowed.
By treating clarification as a form of implicit supervision, AC-RL demonstrates that vision-language interfaces can be effectively learned through interaction alone, without requiring explicit annotations.
\end{abstract}

\section{Introduction}

Recent advances in reinforcement learning have produced text-based reasoning models with remarkable mathematical capabilities~\citep{deepseek_r1, deepseek_math}. 
While these reasoning capabilities are impressive, extending them to visual domains requires careful consideration of how visual and linguistic information should interface.

Several recent works explore decoupled architectures for visual reasoning, where vision modules translate images into text descriptions that are then processed by text-only reasoners~\citep{cola, vicor, visprog}. This modular paradigm offers practical advantages: it enables reuse of existing text-only reasoning models without costly multimodal retraining, allows flexible composition of specialized components, and provides interpretable interfaces between perception and reasoning. Systems like COLA, ViCor, and VCTP demonstrate this approach, using LLMs as coordinators or reasoners that operate on text descriptions of visual content~\citep{vctp}. The common thread is that visual information flows through a linguistic bottleneck, requiring careful design of what information to communicate~\citep{beyond_captioning, integrating_visual_interpretation}.

However, this decoupling creates a critical alignment challenge: vision-language tools must learn what visual information each specific reasoner requires for successful problem-solving. Vision-language models are typically trained on diverse multimodal datasets to produce descriptions sufficient for general visual understanding and question answering. Yet different reasoning models may have distinct information needs: one might excel with precise measurements, while another benefits from structural or topological descriptions. Traditional supervised approaches would require annotating ``ideal captions'' for each reasoner, an infeasible task given the diversity of visual reasoning problems and the implicit nature of reasoner preferences. Moreover, what constitutes an informative caption cannot be determined a priori; it emerges only through interaction with the reasoning model.

Reinforcement learning offers a natural framework for this interface learning problem, but applying it to vision-reasoner coordination is challenging. The primary difficulty lies in the sparsity of learning signals: when using binary task rewards, the vision model receives identical zero rewards whether its caption is completely inadequate or missing one crucial detail. Additionally, the large action space of language generation combined with sparse rewards leads to inefficient exploration, where most generated captions result in failure without providing informative gradients. These challenges are compounded by the indirect nature of the feedback: the vision model must infer what information the reasoner needed based solely on binary success signals, without explicit guidance about what was missing.

To overcome the limitations of this sparse reward signal, we introduce Adaptive-Clarification Reinforcement Learning (AC-RL), a framework that enables vision-language models to learn effective interfaces with specific reasoners through a clarification-aware training scaffold (Figure~\ref{fig:figure1}). The key insight is twofold: clarification dialogues during training reveal information gaps that would otherwise result in zero reward, and by penalizing clarification-dependent success, we create optimization pressure for comprehensive initial captions. During training, when the reasoner requests clarification, a frozen reference model responds, ensuring gradients flow only through the initial caption. This design teaches the vision model to front-load relevant information, eliminating the need for costly clarifications at inference, where the system operates in a single pass.

\begin{figure}[!t]
    \vspace{-0.5cm}
    \centering
    \includegraphics[width=\textwidth]{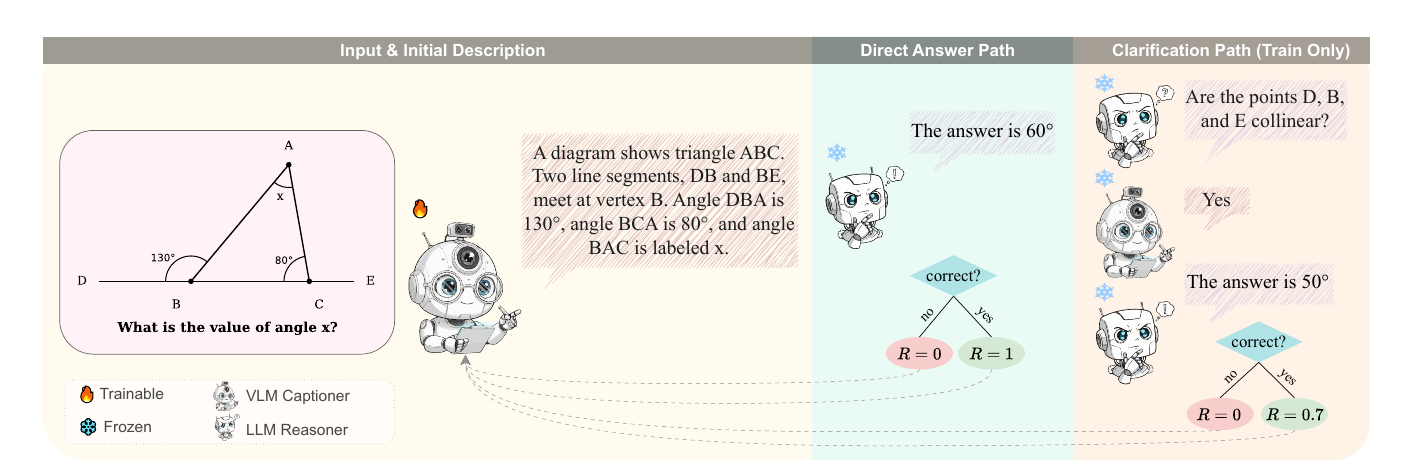}
    \vspace{-0.5cm}
    \caption{\textbf{Adaptive-Clarification Reinforcement Learning (AC-RL) training framework}. 
    Given an image and a question, a trainable captioner generates an initial description. 
    During training, the frozen reasoner evaluates whether this description contains sufficient detail to solve the problem. 
    If yes (Direct Answer Path), it attempts to answer directly, receiving reward $R=1$ for correct answers or $R=0$ for incorrect ones. 
    If the description lacks crucial information (Clarification Path), the reasoner requests specific details, which are provided by a frozen reference captioner. 
    Correct answers after clarification receive partial reward $R=0.7$, while incorrect answers receive $R=0$.
    Gradients (dotted arrows) flow only through the initial caption generation, not through clarification responses.
    At inference, only the direct answer path is used: the model has learned to generate sufficiently detailed initial captions, eliminating the need for clarification.}
    \label{fig:figure1}
    \vspace{-0.5cm}
\end{figure}

More specifically, AC-RL transforms sparse binary rewards into a tiered structure: full reward for direct success, partial reward (we used $\alpha=0.7$) for success requiring clarification, and zero for failure. This densification serves dual purposes. First, it converts many zero-reward episodes into partially rewarded ones, providing a gradient signal when initial captions are nearly sufficient. Second, the penalty $(1-\alpha)$ creates pressure to discover self-sufficient captioning strategies aligned with single-pass deployment. Through thousands of interactions, the vision model explores different description strategies, learning without explicit supervision what quantitative details, spatial relationships, or structural patterns this particular reasoner needs for mathematical problem-solving.

The key contributions of our work are as follows:
\begin{itemize}
    \item An exploration-based framework that enables vision-language models to discover through reinforcement learning what visual information a reasoner requires, adapting from human-caption pretraining without explicit supervision.
    \item A clarification-aware reward structure that uses interaction patterns as learning signals, allowing models to identify information gaps and iteratively improve their captioning strategies through trial and error.
    \item An empirical demonstration that our clarification-aware training scaffold effectively teaches captioners to anticipate reasoner needs, leading to improved accuracy on seven mathematical VQA benchmarks and a measurable reduction in clarification dependency at inference.
\end{itemize}

\section{Related Work}

\paragraph{Reinforcement learning for reasoning in language models.}
Recent work shows RL can teach extended mathematical reasoning, with DeepSeek-R1 demonstrating learned policies outperform prompt-based chain-of-thought \citep{deepseek_r1}. Visual extensions employ diverse strategies: training stability (Skywork-R1V2 \citep{skywork_r1v2}, Vision-R1 \citep{vision-r1}), replay mechanisms (VL-Rethinker \citep{vl_rethinker}, OpenVLThinker \citep{openvlthinker}), and cross-modal formalization (R1-OneVision \citep{r1-onevision}, Mulberry \citep{mulberry}). These methods focus on extending reasoning chains to handle visual inputs. We take an orthogonal approach by shaping the interface between perception and reasoning modules, using clarification-aware rewards to teach captioners what information reasoners need rather than how to reason about it.

\paragraph{Decoupled perception–reasoning and interface design}
Decoupling visual perception from linguistic reasoning offers modularity and the ability to reuse strong text-only reasoners, but it raises an interface-alignment challenge: captions optimized for human readability may omit the quantitative and structural cues a reasoner needs \citep{vicor, integrating_visual_interpretation, beyond_captioning}. Coordination frameworks use an LLM to route or aggregate information from one or more VLMs (e.g., COLA’s coordinator that queries complementary experts) \citep{cola}, or to interleave ``see-think-confirm’’ phases that explicitly ground and verify intermediate steps (VCTP) \citep{vctp}. Neuro-symbolic systems like VisProg sidestep monolithic pipelines by composing programs over off-the-shelf vision tools~\citep{visprog}. Our approach adheres to the decoupled setup but replaces fixed protocols with an RL objective that learns, from interaction, which \textit{caption features} best serve a specific reasoner.

\paragraph{Learning alignment through interaction.}
Several methods optimize captions specifically for reasoning rather than human readability. Most relevant to our work, RACRO directly uses binary task rewards to align a captioner to a reasoner \citep{racro}, demonstrating that interface learning is possible through RL alone. However, RACRO relies solely on sparse binary rewards, which we show can be significantly improved through our clarification-aware tiered reward structure that densifies the learning signal. LAMOC and VLRM leverage language model feedback and VLM-as-reward-model, respectively \citep{lamoc, vlrm}. OmniCaptioner generates long-context descriptions that improve LLM reasoning \citep{omnicaptioner}, while Critic-V employs a learned VLM critic \citep{critic-v}. Beyond vision, multi-agent frameworks have shown that LLMs can coordinate through language-only protocols, and that adapting inputs to a solver's biases can improve performance~\citep{wu2024autogen, paraphrase_solve}.

\section{Methodology}
\label{sec:methodology}
\subsection{The Vision-Reasoner Interface Problem}

We consider a modular architecture where a trainable vision-language model, the captioner $\mathcal{V}_\theta$, translates images into text descriptions that enable a frozen text-only model, the reasoner $\mathcal{R}$, to solve visual reasoning tasks. Given an image $I$ and question $Q$, the system produces an answer $A$. The central challenge lies in learning what visual information the specific reasoner requires, without explicit supervision defining ``ideal captions''.

Our approach leverages clarification dialogues as implicit supervision. When the reasoner requests additional information during training, it reveals that the initial caption failed to communicate adequately. Adaptive-Clarification Reinforcement Learning (AC-RL) exploits this signal through a tiered reward structure and clarification-aware training that teaches the captioner to anticipate and provide information the reasoner would request.

\subsection{Training and Inference Protocols}

During training, we permit structured interaction between the captioner and reasoner. The captioner first generates an initial caption $c_0 \sim \pi_\theta(\cdot \mid I, Q)$ describing the visual content. The reasoner processes this caption and either produces an answer directly or requests clarification with a specific question $q_1$. When clarification is requested, a \emph{frozen reference model} $\pi_{\text{ref}}$ provides the response $c_1 \sim \pi_{\text{ref}}(\cdot \mid I, Q, q_1)$. The reasoner then produces its final answer $A$ using all available information.

Crucially, the clarification response comes from a frozen checkpoint that receives no gradients during training. This design ensures that the captioner cannot rely on improving clarification capabilities and must instead learn to front-load relevant information into the initial caption. Details of this protocol and the complete algorithm appear in Figure~\ref{fig:figure1} and Appendix~\ref{app:algorithm}.

At inference time, the system operates in a single pass: the captioner generates one description $c_0 \sim \pi_\theta(\cdot \mid I, Q)$, and the reasoner must produce the answer based solely on this initial caption. This single-pass constraint is crucial for practical applications where multi-turn interaction would be computationally expensive or require architectural changes to existing tool-calling frameworks. By learning to front-load information during training, our approach produces captioners that work with standard single-pass inference. The reasoner processes the initial caption without needing to be modified to request clarifications.

\subsection{Clarification-Aware Reward Design}

A key contribution of AC-RL is the tiered reward structure that densifies the learning signal. In standard reinforcement learning for visual question answering, episodes receive binary rewards based solely on answer correctness. This sparse signal provides limited feedback when the captioner produces nearly sufficient but incomplete descriptions.

Our reward function addresses this sparsity by distinguishing three outcomes:
\begin{equation}
R(\tau) = \begin{cases}
1 & \text{if correct answer without clarification} \\
\alpha & \text{if correct answer with clarification} \\
0 & \text{if incorrect answer}
\end{cases}
\end{equation}
where $\alpha \in (0, 1)$ and $\tau$ denotes the complete episode trajectory. We set $\alpha = 0.7$.

This structure serves dual purposes. First, it converts many zero-reward episodes into partially rewarded ones, providing gradient signal when the initial caption contains most but not all necessary information. This densification is particularly valuable early in training when captions frequently lack specific details. Second, the penalty $(1 - \alpha)$ for requiring clarification creates optimization pressure toward self-sufficient initial captions that align with single-pass deployment.

The clarification mechanism thus acts as a scaffold that provides intermediate credit assignment. Episodes where the reasoner would fail with the initial caption alone but succeeds after clarification receive partial reward, signaling that the caption was nearly adequate. This graded feedback enables more sample-efficient learning compared to binary rewards that treat all failures equivalently.

\subsection{Policy Optimization}

We optimize the captioner using a KL-regularized objective that balances task performance with proximity to the pretrained initialization:
\begin{equation}
J(\theta) = \mathbb{E}_{(I,Q) \sim \mathcal{D}} \left[ \mathbb{E}_{\tau \sim \pi_\theta}[R(\tau)] \right] - \beta \cdot D_{\text{KL}}(\pi_\theta \| \pi_{\text{ref}})
\end{equation}
where $\pi_{\text{ref}}$ denotes a fixed reference policy for regularization.

We employ Beta-Normalization Policy Optimization~\citep{bnpo} for advantage estimation, as it provides stable normalization. Importantly, gradients flow only through the initial caption generation $c_0$. Neither the frozen reasoner $\mathcal{R}$ nor the clarification model $\pi_{\text{ref}}$ receive gradient updates, ensuring the captioner adapts unilaterally to the fixed reasoner's preferences. The complete algorithmic specification appears in Appendix~\ref{app:algorithm}.

\section{Experiments}
\label{sec:experiments}
We evaluate whether Adaptive-Clarification Reinforcement Learning (AC-RL) successfully aligns vision-language models with the information needs of downstream reasoning systems. Our experimental design tests three key hypotheses: (1) AC-RL improves task performance compared to both pretrained models and standard reinforcement learning approaches (i.e., learning with the tiered rewards and clarifications is beneficial), (2) the clarification-aware training scaffold contributes meaningfully to performance gains beyond standard RL, and (3) the improvements stem from learning to front-load reasoner-relevant information into initial captions.

\label{subsec:setup}

\paragraph{System Architecture.}
We instantiate the trainable captioning policy $\mathcal{V}_\theta$ with InternVL3-2B or Qwen2.5-VL-3B. Their modest size enables extensive RL experimentation and ablations, and when paired with a strong reasoner, they provide reliable baseline competence across the evaluated benchmarks. Their moderate scale also makes GRPO-style optimization tractable without requiring extensive computational resources. The frozen reasoning system $\mathcal{R}$ is DeepSeek-R1-Qwen-32B, a powerful text-only model trained for mathematical reasoning with particularly strong instruction following capabilities. We also evaluate the vision models as standalone systems to quantify the benefits of architectural decoupling.

\paragraph{Training Configurations and Method Baselines}
We compare four training configurations to isolate the effects of different design choices. The \textbf{Standalone VLM} baseline has the vision-language model answer questions directly without a separate reasoner. The \textbf{Pretrained + Reasoner} configuration pairs the pretrained VLM with the frozen reasoner without fine-tuning, measuring the immediate benefit of modular architectures. \textbf{Binary-Reward RL} fine-tunes the captioner with binary task success rewards, similarly to recent work, RACRO~\citep{racro}. Finally, \textbf{AC-RL} employs our tiered rewards and clarification-aware training scaffold. These baselines allow us to decompose gains from architectural decoupling, reinforcement learning, and our clarification-aware training scaffold. All RL methods are trained on ViRL-39K~\citep{vl_rethinker}, a visual instruction dataset focused on mathematical reasoning, with evaluation performed on separate held-out benchmarks.

\paragraph{Training Protocol.}
AC-RL training uses the clarification-aware scaffold detailed in Section \ref{sec:methodology}. During training, the captioner generates $c_0 \sim \pi_\theta$, and if the reasoner requests clarification, a frozen reference policy provides the response. The tiered rewards ($R=1$ for direct success, $R=0.7$ with clarification, $R=0$ for failure) create gradients only through the initial caption. We optimize using BNPO with KL regularization. Notably, AC-RL maintains greater generation diversity than standard RL throughout training (Appendix~\ref{app:diversity}).

\paragraph{Evaluation Protocol.}
All models are evaluated using \textbf{single-pass evaluation}: the captioner produces a description that the reasoner uses to generate a final answer, with no clarification permitted. This protocol ensures that performance gains reflect improved caption quality rather than multi-turn interaction benefits. For behavioral analyses in Section \ref{subsec:mechanism}, we additionally conduct instrumented runs with \textbf{clarification-enabled evaluation} where clarification is allowed, to measure clarification patterns.

\paragraph{Datasets and Baselines}
We evaluate on seven mathematical reasoning benchmarks: MathVista (testmini)~\citep{mathvista}, MathVision~\citep{mathvision}, MathVerse~\citep{mathverse}, MMMU (validation)~\citep{mmmu}, WeMath (strict)~\citep{wemath}, DynaMath (worst-case)~\citep{dynamath}, and LogicVista~\citep{logicvista}. We report exact-match accuracy using EvalScope~\citep{evalscope_2024} and VLMEvalKit~\citep{vlmevalkit}. We compare against leading proprietary models: GPT-4o~\citep{gpt4}, Claude-3.7-Sonnet, Gemini-2.0-Flash~\citep{gemini}, o1~\citep{openai_o1}, Gemini 2.5 Pro~\citep{gemini2.5}, and Seed1.5-VL (Thinking)~\citep{seed15vl}; open-weights general-purpose models: Qwen2.5-VL~\citep{qwen25vl}, and Ovis2~\citep{ovis}; and reasoning-optimized models: InternVL3-MPO variants~\citep{internvl3}, VL-Rethinker~\citep{vl_rethinker}, QVQ-72B-Preview~\citep{qvq}, and MMR1~\citep{MMR1}.
\subsection{Overall Performance}
\label{subsec:mainresults}

Table~\ref{tab:main_results} presents our results in the context of leading proprietary and open-weights models. We first note that small vision-language models achieve limited performance when solving problems directly: InternVL-2B and Qwen-3B reach only 32.4\% and 34.6\% average accuracy respectively as standalone systems. Simply pairing these models with a strong reasoner (Pretrained + Reasoner) improves performance to 39.3\% and 39.0\%, demonstrating the value of modular architectures. However, applying AC-RL yields the most substantial gains.

With a Qwen-3B captioner, AC-RL improves the average accuracy from 39.0 to 43.4 (+4.4 points), with substantial gains on robustness and vision-centric benchmarks like \emph{DynaMath} (+10.6) and \emph{MathVerse} (+5.2). The InternVL-2B captioner sees a similar +3.3 average point increase. These results, obtained under an identical single-pass protocol, demonstrate that AC-RL effectively aligns the captioning policy with the downstream reasoner's needs.

These gains confirm that AC-RL successfully modifies the captioning policy to be more effective for the downstream reasoner. While we observe minor regressions on WeMath, the broad improvements across other benchmarks suggest that AC-RL effectively learns to prioritize the quantitative and structural details essential for complex visual reasoning.

\begin{table*}[!th]
\centering
\caption{Main results on multi-modal reasoning benchmarks: MathVista (MVista), MathVision (MVision), MathVerse (MVerse), MMMU, WeMath (WeM), DynaMath (DynaM), and LogicVista (LVista). Our AC-RL method, evaluated in the final blocks for each model size, significantly enhances the performance of small vision models.}
\vspace{0.25cm}
\label{tab:main_results}
\resizebox{\textwidth}{!}{%
\begin{tabular}{@{}l|ccccccc|c@{}}
\toprule
\textbf{Model} & \textbf{\footnotesize{MVista}} & \textbf{\footnotesize{MVision}} & \textbf{\footnotesize{MVerse}} & \textbf{\footnotesize{MMMU}} & \textbf{\footnotesize{WeM}} & \textbf{\footnotesize{DynaM}} & \textbf{\footnotesize{LVista}} & \textbf{\footnotesize{AVG}} \\
\midrule
\addlinespace[0.35em]
\multicolumn{9}{@{}l}{\textbf{\textsc{Proprietary Models}}} \\
\addlinespace[0.15em]
GPT-4o-20241120   & 60.0 & 31.2 & 40.6 & 70.7 & 45.8 & 34.5 & 52.8 & 47.9 \\
Gemini-2.0-Flash  & 70.4 & 43.6 & 47.7 & 72.6 & 47.4 & 42.1 & 52.3 & 53.7 \\
Claude-3.7-Sonnet & 66.8 & 41.9 & 46.7 & 75.0 & 49.3 & 39.7 & 58.2 & 53.9 \\
o1                & 73.9 & 42.2 & ---    & \textbf{78.2} & ---    & ---    & ---    & --- \\
Gemini 2.5 Pro    & \textbf{80.9} & \textbf{69.1} & \textbf{76.9} & 74.7 & \textbf{78.0} & \textbf{56.3} & \textbf{73.8} & \textbf{72.8} \\
Seed1.5-VL (Thinking) & 79.5$^\dagger$ & 68.7 & ---    & 77.9 & 77.5 & ---    & ---    & 75.9$^*$ \\
\midrule
\addlinespace[0.35em]
\multicolumn{9}{@{}l}{\textbf{\textsc{Open-Weights Models}}} \\
\addlinespace[0.15em]
InternVL3-2B-MPO  & 57.0 & 21.7 & 25.3 & 48.6 & 22.4 & 14.6 & 36.9 & 32.4 \\
InternVL3-8B-MPO  & 71.6 & 29.3 & 39.8 & 62.7 & 37.1 & 25.5 & 44.1 & 44.3 \\
Ovis2-8B          & 71.8$^\dagger$ & 25.9 & 42.3 & 59.0 & ---    & ---    & 39.4 & 47.7 \\
InternVL3-14B-MPO & 75.1 & 37.2 & 44.4 & 67.1 & 43.0 & 31.3 & 51.2 & 49.9 \\
QVQ-72B-Preview   & 70.3 & 34.9 & 48.2 & 70.3 & 39.0 & 30.7 & 58.2 & 50.2 \\
MMR1-Math-v0-7B   & 71.0$^\dagger$ & 30.2 & 49.2 & ---    & ---    & ---    & 50.8 & 50.3 \\
InternVL3-38B-MPO & 75.1 & 34.2 & 48.2 & 70.1 & \textbf{48.6} & \textbf{35.3} & \textbf{58.4} & 52.8 \\
VL-Rethinker-72B  & \textbf{80.3} & \textbf{43.9} & ---    & 68.8 & ---    & ---    & ---    & --- \\
InternVL3-78B-MPO & 79.0 & 43.1 & \textbf{51.0} & \textbf{72.2} & 46.0 & 35.1 & 55.9 & \textbf{54.6} \\
\midrule
\addlinespace[0.35em]
\multicolumn{9}{@{}l}{\textbf{\textsc{InternVL-2B}}} \\
\addlinespace[0.15em]
Standalone VLM & 57.0 & 21.9 & 25.3 & 48.6 & 22.4 & 14.6 & 36.9 & 32.4 \\
Pretrained + Reasoner & 61.0 & 34.7 & 28.9 & 57.4 & \textbf{32.8} & 12.0 & 48.3 & 39.3 \\
\textbf{AC-RL (ours)} 
& \textbf{65.3} & \textbf{36.7} & \textbf{36.8} & \textbf{58.4} & 32.1 & \textbf{20.0} & \textbf{49.0} & \textbf{42.6} \\
\phantom{\textbf{AC-RL (ours)}}
& \textcolor{teal}{\footnotesize{(+4.3)}}
& \textcolor{teal}{\footnotesize{(+2.0)}}
& \textcolor{teal}{\footnotesize{(+7.9)}}
& \textcolor{teal}{\footnotesize{(+1.0)}}
& \textcolor{red}{\footnotesize{(-0.7)}}
& \textcolor{teal}{\footnotesize{(+8.0)}}
& \textcolor{teal}{\footnotesize{(+0.7)}}
& \textcolor{teal}{\footnotesize{(+3.3)}} \\
\midrule
\addlinespace[0.35em]
\multicolumn{9}{@{}l}{\textbf{\textsc{Qwen-3B}}} \\
\addlinespace[0.15em]
Standalone VLM & \textbf{64.5} & 21.9 & 28.8 & 50.1 & 24.2 & 13.4 & 39.6 & 34.6 \\
Pretrained + Reasoner & 59.7 & 32.8 & 29.2 & 55.2 & \textbf{34.7} & 14.2 & 47.2 & 39.0 \\
\textbf{AC-RL (ours)} 
& 63.8 & \textbf{36.8} & \textbf{34.4} & \textbf{57.7} & 33.2 & \textbf{24.8} & \textbf{53.0} & \textbf{43.4} \\
\phantom{\textbf{AC-RL (ours)}} 
& \textcolor{teal}{\footnotesize{(+4.1)}} 
& \textcolor{teal}{\footnotesize{(+4.0)}} 
& \textcolor{teal}{\footnotesize{(+5.2)}} 
& \textcolor{teal}{\footnotesize{(+2.5)}} 
& \textcolor{red}{\footnotesize{(-1.5)}} 
& \textcolor{teal}{\footnotesize{(+10.6)}} 
& \textcolor{teal}{\footnotesize{(+5.8)}} 
& \textcolor{teal}{\footnotesize{(+4.4)}} \\
\bottomrule
\multicolumn{9}{l}{\footnotesize{$\dagger$ Result on \emph{testmini}/\emph{mini} subset.}}
\end{tabular}
}

\vspace{-0.5cm}
\end{table*}
\subsection{Ablations}
\label{subsec:decomposition}

To better understand the source of these improvements, we analyze the incremental value of each component in our approach using the Qwen2.5-VL-3B model (Table~\ref{tab:ablation_scaffold}). All configurations are evaluated using direct inference (no clarification allowed).

\begin{table*}[!h]
\centering
\caption{\textbf{Decomposition of performance gains on Qwen2.5-VL-3B} across multi-modal reasoning benchmarks: MathVista (MVista), MathVision (MVision), MathVerse (MVerse), MMMU, WeMath (WeM), DynaMath (DynaM), and LogicVista (LVista). All models are evaluated in a single-pass setting. The results show that AC-RL provides a significant performance boost beyond both architectural decoupling and Binary Rewards.}
\label{tab:ablation_scaffold}
\resizebox{\textwidth}{!}{
\begin{tabular}{@{}l|ccccccc|c@{}}
\toprule
\textbf{Training Method} & \textbf{MVista} & \textbf{MVision} & \textbf{MVerse} & \textbf{MMMU} & \textbf{WeM} & \textbf{DynaM} & \textbf{LVista} & \textbf{AVG} \\
\midrule
VLM-only (No Reasoner)       & \textbf{64.50} & 21.90 & 28.80 & 50.10 & 24.20   & 13.40 & 39.60 & 34.64 \\
Decoupled (No RL)            & 59.69 & 32.80 & 29.18 & 55.22 & \textbf{34.71} & 14.17 & 47.20 & 39.00 \\
Binary-Reward RL                & 62.60 & 34.30 & 31.09 & 55.44 & 33.45 & 17.56 & 47.42 & 40.27 \\
\textbf{AC-RL (Ours)}  & 63.80 & \textbf{36.84} & \textbf{34.39} & \textbf{57.70} & 33.22 & \textbf{24.75} & \textbf{53.02} & \textbf{43.39} \\
\bottomrule
\end{tabular}}
\end{table*}

The results illustrate a clear progression. First, decoupling the captioner from the reasoner (Pretrained + Reasoner) yields substantial gains, particularly on structurally complex tasks like \emph{MathVision} (21.9 $\to$ 32.8). Second, applying binary-reward RL provides further improvements across most benchmarks. However, our clarification-aware AC-RL delivers the most substantial gains over Binary-Reward RL: \emph{DynaMath} improves by +7.2 points (17.56 $\to$ 24.75), \emph{LogicVista} by +5.6 points, and \emph{MathVerse} by +3.3 points. 

The improvements on DynaMath are especially noteworthy. While Binary-Reward RL achieves modest gains over the pretrained baseline (+3.4 points), AC-RL delivers an additional +7.2 points, reaching 24.75\% accuracy on the most challenging problem variants.

\subsection{Analysis of Interface Behavior}
\label{subsec:mechanism}

To understand the mechanism underlying AC-RL's performance gains, we analyze how the training procedure modifies the captioner's behavior. Our hypothesis is that the clarification-aware reward teaches the model to \emph{front-load} reasoner-salient information into the initial caption, thereby reducing the need for clarification.

We first measure the \textbf{clarification attempt rate} using clarification-enabled evaluation (for measurement purposes only) and counting how frequently it requests clarification when processing captions from AC-RL-trained versus baseline models. Table~\ref{tab:clar_rate} shows that AC-RL dramatically reduces the frequency of clarification requests. On MathVision, the clarification rate drops from 40.69\% (Binary-Reward RL baseline) to 28.95\% (AC-RL). On MathVerse, the reduction is even more pronounced at 39\%. This confirms that AC-RL-trained captioners learn to preemptively include information that would otherwise trigger follow-up questions.

\begin{table}[!h]
\vspace{-0.4cm}
\centering
\caption{Clarification attempt rate during clarification-enabled evaluation. Lower rates indicate more comprehensive initial captions.}
\vspace{0.25cm}
\label{tab:clar_rate}
\begin{tabular}{@{}lccc@{}}
\toprule
\textbf{Dataset} & \textbf{Binary-Reward} & \textbf{AC-RL Model} & \textbf{Reduction} \\
\midrule
MathVision & 40.69\% & 28.95\% & \textbf{29\%} \\
MathVerse\_MINI & 49.57\% & 30.28\% & \textbf{39\%} \\
\bottomrule
\end{tabular}
\end{table}

Building on this evidence, we compute the \textbf{clarification gap}: the difference in accuracy between clarification-enabled evaluation and single-pass evaluation. A smaller gap indicates that the initial caption is more informationally self-sufficient. Table~\ref{tab:clar_gap} presents these results. For the baseline model, allowing clarification provides substantial accuracy gains: +2.89 points on MathVision and +4.06 points on MathVerse. In contrast, the AC-RL model shows minimal benefit from clarification and even a negative gap on MathVerse (-2.54), suggesting that its initial captions are so well-aligned that additional clarification can sometimes introduce noise. The relative improvement metric (fraction of previously incorrect answers that become correct with clarification) further confirms this pattern: AC-RL achieves 1.5\% relative improvement on MathVision versus 4.4\% for the baseline.

\begin{table}[!h]
\vspace{-0.25cm}
\centering
\caption{Performance gap between clarification-enabled and single-pass evaluation. Smaller gaps indicate greater self-sufficiency of initial captions. ``Rel'' denotes the fraction of previously incorrect answers corrected by clarification.}
\vspace{0.25cm}
\label{tab:clar_gap}
\setlength{\tabcolsep}{4pt}{
\resizebox{\textwidth}{!}{%
    \begin{tabular}{@{}lcccc@{}}
    \toprule
    \textbf{Dataset} & \textbf{Model} & \textbf{Clarification-Enabled} & \textbf{Single-Pass} & \textbf{Gap (Abs / Rel)} \\
    \midrule
    \multirow{2}{*}{MathVision}
    & AC-RL     & 37.66\% & 36.71\% & \textbf{+0.95 / 0.015} \\
    & Binary-Reward & 37.20\% & 34.31\% & +2.89 / 0.044 \\
    \midrule
    \multirow{2}{*}{MathVerse\_MINI}
    & AC-RL     & 34.26\% & 36.80\% & \textbf{-2.54 / -0.040} \\
    & Binary-Reward & 35.15\% & 31.09\% & +4.06 / 0.059 \\
    \bottomrule
    \end{tabular}
 }
}
\end{table}

Finally, to assess whether clarification requests are genuinely necessary, we measure accuracy under \textbf{denied clarification}: we identify instances where the model requested clarification during clarification-enabled evaluation, then examine the single-pass accuracy on this same subset (equivalent to denying the clarification request). The drop $\Delta_{\text{deny}} = \text{Acc}_{\text{clarification-enabled}} - \text{Acc}_{\text{denied}}$ quantifies how much the model relies on clarification when it requests it. Table~\ref{tab:deny} shows that while AC-RL reduces overall clarification frequency, its remaining requests are more selective. The AC-RL model exhibits a larger performance drop when clarification is denied ($\Delta_{\text{deny}} = 14.21$ on MathVision versus 11.43 for the baseline), despite making fewer requests overall (880 versus 1,237). This suggests that AC-RL learns to distinguish between recoverable and irrecoverable information gaps: it produces self-sufficient captions when possible, but when it does request clarification, these requests target instances where critical visual details cannot be inferred from context alone.

\begin{table}[h]
\centering
\caption{Accuracy impact of denying clarification on instances where it was requested. Larger drops indicate more critical clarification requests.}
\vspace{0.25cm}
\label{tab:deny}
\begin{tabular}{@{}lcccc@{}}
\toprule
\textbf{Dataset} & \textbf{Model} & \textbf{\# Requests} & \textbf{Acc$_{\text{single-pass}}$} & \textbf{Acc$_{\text{deny}}$ / $\Delta_{\text{deny}}$} \\
\midrule
\multirow{2}{*}{MathVision}
& AC-RL     & 880  & 36.71\% & 22.50\% \; / \; \textbf{14.21} \\
& Binary-Reward & 1237 & 34.31\% & 22.88\% \; / \; 11.43 \\
\midrule
\multirow{2}{*}{MathVerse\_MINI}
& AC-RL     & 276  & 36.80\% & 33.33\% \; / \; \textbf{3.47} \\
& Binary-Reward & 496  & 31.09\% & 28.43\% \; / \; 2.66 \\
\bottomrule
\end{tabular}
\end{table}

\subsection{Subject-Level Performance Analysis}
\label{subsec:subjectlevel}
\begin{figure}[!h]
\centering
\includegraphics[width=\textwidth]{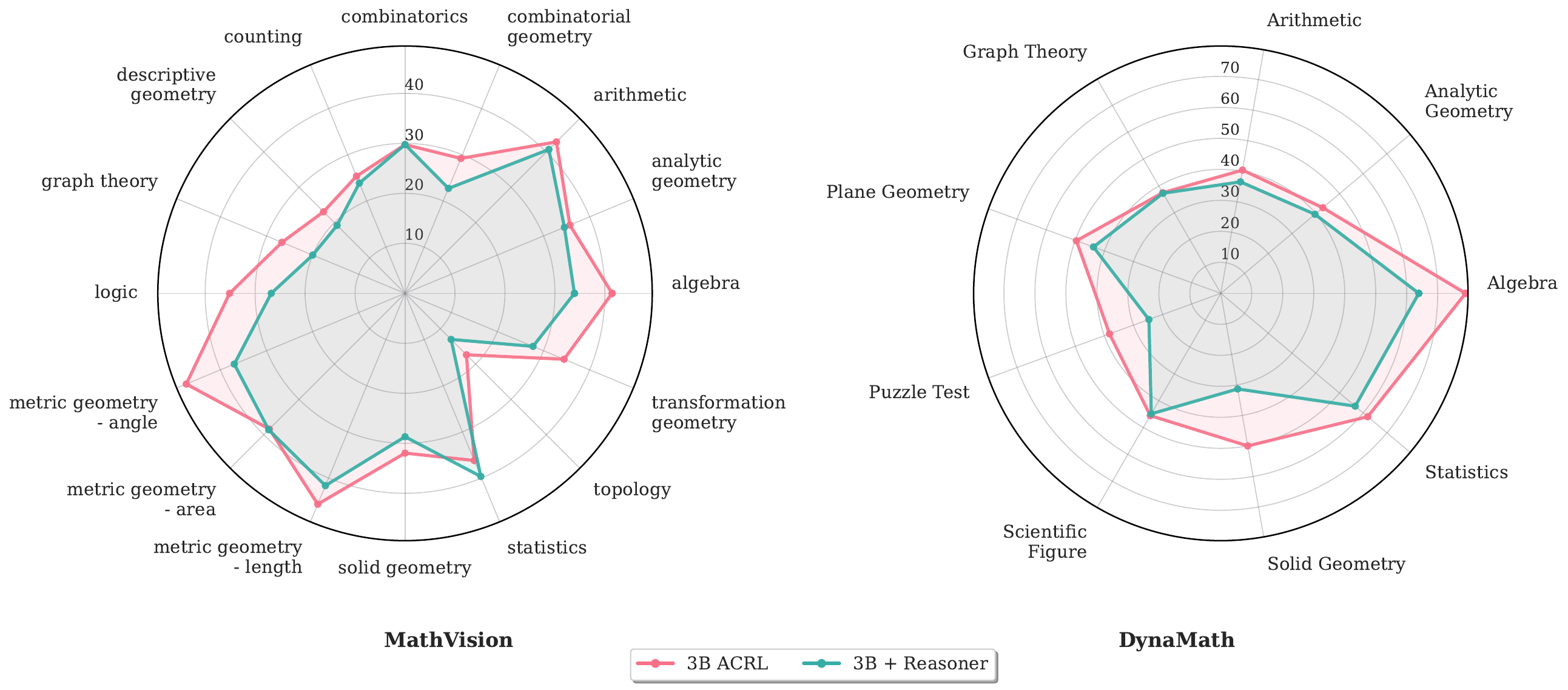}
\caption{Subject-level performance comparing AC-RL to the pretrained baseline using Qwen-3B + Reasoner. Left: MathVision subjects. Right: DynaMath categories. AC-RL shows targeted improvements in quantitatively-intensive domains like geometry and algebra.}
\label{fig:sla_analysis}
\end{figure}
To better understand the nature of these improvements, we conduct a fine-grained analysis of performance across different mathematical subjects and difficulty levels. This reveals whether the model is generically improving or learning to prioritize specific types of information relevant to the reasoner. We decompose the MathVision and DynaMath benchmarks by subject area and compute per-subject accuracy for both the AC-RL model and the pretrained baseline (Qwen-3B + Reasoner configuration). Additionally, we analyze DynaMath average performance stratified by education level (elementary, high school, undergraduate) to assess whether AC-RL's benefits vary with problem complexity. 

Figure~\ref{fig:sla_analysis} visualizes the per-subject performance comparison. The analysis reveals that AC-RL's gains are concentrated in subjects that depend heavily on precise quantitative and structural information. On MathVision, we observe the largest improvements in metric geometry for angles (+10.4 points), transformation geometry (+8.3 points), and algebra (+7.5 points). DynaMath shows even more pronounced gains in solid geometry (+18.7 points), algebra (+15.1 points), and puzzle tests (+13.5 points). These subjects arguably require extracting specific numerical values, spatial relationships, or structural patterns from images. In contrast, performance differences are minimal in subjects that rely more on general visual understanding or pattern recognition.

\begin{wrapfigure}{!hr}{0.4\textwidth}
\centering
\vspace{-0.45cm}
\includegraphics[width=0.35\textwidth]{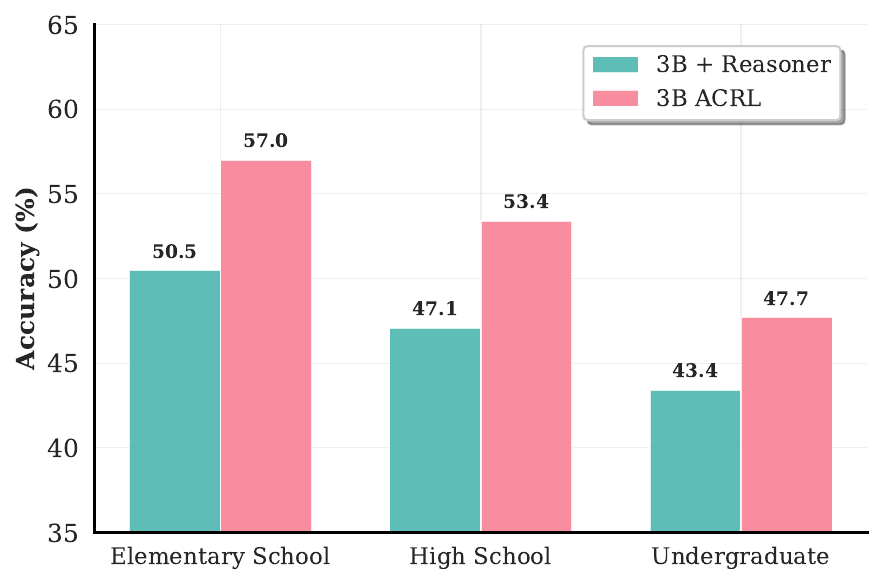}
\caption{DynaMath average accuracy across education levels. AC-RL consistently outperforms the baseline regardless of problem difficulty.}
\vspace{-1cm}
\label{fig:levels}
\end{wrapfigure}

Figure~\ref{fig:levels} shows that AC-RL maintains consistent improvements across all difficulty levels on DynaMath. The absolute gains remain relatively stable at 6.5, 6.3, and 4.3 points for elementary, high school, and undergraduate levels, respectively. While both models show expected degradation as problem complexity increases, AC-RL preserves its advantage by learning to extract critical visual details needed at each level. The slightly smaller gain at the undergraduate level may reflect inherent limits in what visual information alone can contribute to highly abstract problems.

This non-uniform improvement pattern indicates that AC-RL learns to extract and prioritize the specific types of information most valuable to the downstream reasoner. The selective nature of these improvements suggests that the clarification-aware training successfully identifies and addresses systematic information gaps in the original captioning policy, with the model discovering domain-specific extraction strategies through interaction rather than explicit supervision.

\section{Conclusion}
We presented Adaptive-Clarification Reinforcement Learning (AC-RL), a framework that learns vision-reasoner interfaces through interaction rather than supervision. By using clarification requests as implicit feedback and tiered rewards, AC-RL enables captioners to discover what information their paired reasoner requires without explicit annotation. Our experiments demonstrate consistent improvements across seven mathematical reasoning benchmarks, with particularly strong gains on quantitatively-intensive domains.
The success of AC-RL suggests that interface alignment between AI modules is a learnable property that can be optimized through reinforcement learning. This principle of using interaction patterns as learning signals extends beyond vision-language systems to any modular architecture where components coordinate through natural language. Future work could explore multi-turn clarification with decaying rewards, bidirectional adaptation where both modules co-evolve, and applications to other language-producing tools that interface with orchestrating LLMs.

\section*{Acknowledgments}
The authors thank the Netherlands Organization for Scientific Research (NWO) for their support (VICI grant VI.C.212.053).

This work used the Dutch national e-infrastructure with the support of the SURF Cooperative using grant no. EINF-15485.

\bibliographystyle{iclr2026_conference}
\bibliography{bibliography}

\newpage
\appendix

\section{AC-RL Algorithm}
\label{app:algorithm}

We provide a formal specification of the Adaptive-Clarification Reinforcement Learning (AC-RL) algorithm. The algorithm operates in an episodic setting where each episode consists of a visual reasoning problem $(I, Q)$ sampled from the dataset $\mathcal{D}$.

\subsection{Formal Problem Setup}

Let $\mathcal{M} = (\mathcal{S}, \mathcal{A}, P, R, \gamma)$ denote the Markov Decision Process where:
\begin{itemize}
    \item $\mathcal{S} = \mathcal{I} \times \mathcal{Q} \times \mathcal{H}$ is the state space,
    \item $\mathscr{A} = \mathcal{C}$ is the action space (caption generation),
    \item $P: \mathcal{S} \times \mathcal{A} \rightarrow \Delta(\mathcal{S})$ is the transition kernel,
    \item $R: \mathcal{T} \rightarrow [0, 1]$ is the reward function defined on trajectories,
    \item $\gamma = 1$ (undiscounted episodic setting).
\end{itemize}

A single episode proceeds as follows. At $t=0$, the vision policy emits the initial caption $c_0 \sim \pi_\theta(\cdot \mid s_0)$ with $s_0=(I,Q,\emptyset)$. The reasoner stochastically decides whether to request clarification and, if so, which question to ask; we denote this by a $\theta$-independent kernel $q_1 \sim p(\cdot \mid s_0, c_0)$. When $q_1 \neq \emptyset$, the clarification caption is produced by a \emph{frozen} checkpoint $\pi_{\text{ref}}$:
\[
c_1 \sim \pi_{\text{ref}}(\cdot \mid s_1), \qquad s_1=(I,Q,\{c_0,q_1\}),
\]
and the reasoner produces a final answer according to a $\theta$-independent kernel $A \sim p(\cdot \mid Q, c_0, (q_1,c_1))$. When $q_1=\emptyset$, the reasoner answers from $(Q,c_0)$ directly. The next state appends the sampled variables to the dialogue history. Thus $P$ composes the reasoner’s stochastic behavior and the frozen clarification-caption policy $\pi_{\text{ref}}$; conditioned on the agent’s action $c_0$, these post-action mechanisms are \emph{$\theta$-independent} by construction. The episode terminates after the answer $A$ is produced, and the reward is assigned as in the main text.

\paragraph{Clarification captioning is frozen.}
In all experiments, the clarification caption $c_1$ is generated by a \emph{frozen} checkpoint $\pi_{\text{ref}}$ (typically the reference policy). Its distribution does not change during training. Consequently, no gradients flow through $\pi_{\text{ref}}$ or through the reasoner $\mathcal{R}$; only the log-probabilities of the initial caption tokens $c_0$ contribute to the policy update.

\subsection{Policy Update}

The policy is updated using the clipped surrogate objective with a fixed KL reference:
\begin{equation}
    \mathcal{L}_{\text{clip}}(\theta) = -\mathbb{E}_{(s_t, a_t)} \left[ \min \left( r_t(\theta) A_t, \operatorname{clip}(r_t(\theta), 1 - \epsilon, 1 + \epsilon) A_t \right) \right] + \beta \, D_{KL}(\pi_\theta \,\|\, \pi_{\text{ref-KL}}),
\end{equation}
where $r_t(\theta) = \pi_\theta(a_t \mid s_t) / \pi_{\theta_{\text{old}}}(a_t \mid s_t)$, and $A_t$ is the advantage computed via BNPO~\citep{bnpo}. The gradient is computed solely on the initial captioning segments $c_0$; the clarification responses $c_1$ are emitted by the frozen $\pi_{\text{ref}}$ and are thus $\theta$-independent.

\newpage

\subsection{Training Algorithm}

\begin{algorithm}[!h]
\caption{Adaptive-Clarification Reinforcement Learning (AC-RL)}
\label{alg:acrl}
\begin{algorithmic}[1]
\REQUIRE Dataset $\mathcal{D}$, vision model $\mathcal{V}_\theta$, reasoner $\mathcal{R}$, penalty $\alpha$, group size $M$, gradient steps $K$
\STATE Initialize policy $\pi_\theta \leftarrow \mathcal{V}_\theta$ with parameters $\theta$
\STATE Initialize \emph{frozen} clarification captioner $\pi_{\text{ref}}$ (checkpoint used only for $c_1$)
\STATE Initialize fixed KL reference $\pi_{\text{ref}} \leftarrow \pi_\theta$
\FOR{iteration $t = 1$ to $T$}
    \STATE Sample batch $\mathcal{B} = \{(I_j, Q_j)\}_{j=1}^B \sim \mathcal{D}$
    \FOR{each $(I_j, Q_j) \in \mathcal{B}$}
        \FOR{$i = 1$ to $M$}
            \STATE Generate initial caption: $c_0^{(i,j)} \sim \pi_\theta(\cdot \mid I_j, Q_j)$
            \STATE Sample reasoner’s clarification decision: $q_1^{(i,j)} \sim \mathcal{R}_{\text{clarify}}(\cdot \mid Q_j, c_0^{(i,j)})$ \hfill {\small (no gradients)}
            \IF{$q_1^{(i,j)} \neq \emptyset$}
                \STATE Generate clarification caption from \emph{frozen} checkpoint: $c_1^{(i,j)} \sim \pi_{\text{ref}}(\cdot \mid I_j, Q_j, (c_0^{(i,j)}, q_1^{(i,j)}))$ \hfill {\small (no gradients)}
                \STATE Get answer: $A^{(i,j)} \sim \mathcal{R}(\cdot \mid Q_j, c_0^{(i,j)}, (q_1^{(i,j)}, c_1^{(i,j)}))$ \hfill {\small (no gradients)}
                \STATE Set clarification flag: $C^{(i,j)} = 1$
            \ELSE
                \STATE Get answer: $A^{(i,j)} \sim \mathcal{R}(\cdot \mid Q_j, c_0^{(i,j)})$ \hfill {\small (no gradients)}
                \STATE Set clarification flag: $C^{(i,j)} = 0$
            \ENDIF
            \STATE Compute reward: $R^{(i,j)} = \begin{cases}
                1.0 & \text{if correct}(A^{(i,j)}) \land C^{(i,j)} = 0 \\
                \alpha & \text{if correct}(A^{(i,j)}) \land C^{(i,j)} = 1 \\
                0 & \text{otherwise}
            \end{cases}$
        \ENDFOR
        \STATE Fit Beta parameters $(\alpha_\beta^{(j)}, \beta_\beta^{(j)})$ to $\{R^{(i,j)}\}_{i=1}^M$
        \STATE Compute BNPO advantages $\{A_{\text{BNPO}}^{(i,j)}\}_{i=1}^M$
    \ENDFOR
    \FOR{$k = 1$ to $K$}
        \STATE Update policy with clipping and KL penalty: $\theta \leftarrow \theta - \eta \nabla_\theta \mathcal{L}_{\text{clip}}(\theta;\pi_{\text{ref-KL}})$
    \ENDFOR
\ENDFOR
\RETURN Trained policy $\pi_\theta$
\end{algorithmic}
\end{algorithm}

\newpage
\section{Unbiasedness of the Three-Tier Reward}
\label{app:proof}

In this section, we provide a formal proof that our three-tier reward structure maintains the unbiasedness property of the REINFORCE policy gradient estimator, even when the reasoner exhibits stochasticity.

\begin{theorem}[Unbiasedness of the Three-Tier Reward with Stochastic Reasoner]
\label{thm:unbiased}
Let $\xi \sim p(\cdot \mid \tau)$ denote all post-action randomness after the policy chooses its actions (e.g., the reasoner’s sampling noise and, when clarification is used, the frozen clarification-caption sampling). Define the extended trajectory $\tilde{\tau} = (\tau, \xi)$ with joint density:
\begin{equation}
p_\theta(\tilde{\tau}) = p_\theta(\tau) \cdot p(\xi \mid \tau)
\end{equation}
where $p(\xi \mid \tau)$ is independent of $\theta$.

Let the tiered reward function be defined as:
\begin{equation}
R_{\text{tier}}(\tilde{\tau}) = \begin{cases}
1 & \text{if } \text{correct}(A(\tilde{\tau})) \land C(\tilde{\tau}) = 0 \\
\alpha & \text{if } \text{correct}(A(\tilde{\tau})) \land C(\tilde{\tau}) > 0 \\
0 & \text{otherwise}
\end{cases}
\end{equation}
where $\alpha \in (0, 1)$, and define the training objective:
\begin{equation}
J(\theta) = \mathbb{E}_{\tilde{\tau} \sim p_\theta}[R_{\text{tier}}(\tilde{\tau})].
\end{equation}
For any baseline $b_t(s_t)$ that does not depend on the action $a_t$, the REINFORCE estimator:
\begin{equation}
\hat{g}(\tilde{\tau}) = \sum_{t=0}^{T-1} \nabla_\theta \log \pi_\theta(a_t \mid s_t) \cdot (R_{\text{tier}}(\tilde{\tau}) - b_t(s_t))
\end{equation}
satisfies $\mathbb{E}_{\tilde{\tau} \sim p_\theta}[\hat{g}(\tilde{\tau})] = \nabla_\theta J(\theta)$, i.e., the policy gradient remains unbiased despite the $0/\alpha/1$ reward shaping and post-action stochasticity.
\end{theorem}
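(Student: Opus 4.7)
The plan is to reduce the claim to a standard REINFORCE identity by exploiting the key structural assumption that $p(\xi\mid\tau)$ is $\theta$-independent. Once this is in hand, the specific numerical values of $R_{\text{tier}}$---namely $0$, $\alpha$, and $1$---play no role in the argument; what matters is only that $R_{\text{tier}}(\tilde{\tau})$ is a bounded measurable function of the extended trajectory. The proof decomposes into three routine steps: applying the log-derivative trick to the joint density of $\tilde{\tau}$, simplifying the resulting score function using the factorization $p_\theta(\tilde{\tau}) = p_\theta(\tau)\,p(\xi\mid\tau)$, and verifying that the baseline contributes zero in expectation.

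First I would write $\nabla_\theta J(\theta) = \nabla_\theta \int R_{\text{tier}}(\tilde{\tau})\,p_\theta(\tilde{\tau})\,d\tilde{\tau}$, justify the interchange of gradient and integral via dominated convergence (using boundedness of the reward and smoothness of $\pi_\theta$ in $\theta$), and apply $\nabla_\theta p_\theta = p_\theta\,\nabla_\theta \log p_\theta$ to obtain
\begin{equation*}
\nabla_\theta J(\theta) = \mathbb{E}_{\tilde{\tau}\sim p_\theta}\bigl[R_{\text{tier}}(\tilde{\tau})\,\nabla_\theta \log p_\theta(\tilde{\tau})\bigr].
\end{equation*}
Next I would factor $\log p_\theta(\tilde{\tau}) = \log p_\theta(\tau) + \log p(\xi\mid\tau)$ and observe that the second term has zero gradient because $p(\xi\mid\tau)$ is $\theta$-independent. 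A further decomposition $p_\theta(\tau) = p(s_0)\prod_t \pi_\theta(a_t\mid s_t)\,P(s_{t+1}\mid s_t, a_t)$ leaves only the policy factors carrying $\theta$-dependence, yielding $\nabla_\theta \log p_\theta(\tilde{\tau}) = \sum_t \nabla_\theta \log \pi_\theta(a_t\mid s_t)$. Substituting back recovers the classical policy-gradient identity, now applied to the shaped reward.

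For the baseline, I would condition on $s_t$ and invoke the standard ``score function integrates to zero'' fact: $\mathbb{E}_{a_t\sim \pi_\theta(\cdot\mid s_t)}[\nabla_\theta \log \pi_\theta(a_t\mid s_t)] = \nabla_\theta \int \pi_\theta(a\mid s_t)\,da = 0$. Because $b_t(s_t)$ does not depend on $a_t$, the tower property gives $\mathbb{E}[\nabla_\theta \log \pi_\theta(a_t\mid s_t)\,b_t(s_t)] = 0$ for each $t$. Summing over $t$ and combining with the previous displays concludes that $\mathbb{E}[\hat{g}(\tilde{\tau})] = \nabla_\theta J(\theta)$.

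The only substantive step is the $\theta$-independence of $p(\xi\mid\tau)$; this is not a mathematical obstacle so much as a modeling claim that relies on the algorithmic design---namely that $\mathcal{R}$ is frozen, that the clarification captioner $\pi_{\text{ref}}$ is a fixed checkpoint receiving no gradients, and that the reasoner's clarification-decision kernel contains no parameters tied to $\theta$. Granting this, the entire argument is textbook REINFORCE applied to the joint distribution over $(\tau, \xi)$, and the tiered $0/\alpha/1$ structure of the reward enters only through the numerical value plugged into the estimator, never through any gradient calculation.
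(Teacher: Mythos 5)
Your proposal is correct and follows essentially the same route as the paper's proof: the log-derivative trick on the joint density of $\tilde{\tau}$, cancellation of the $\nabla_\theta \log p(\xi\mid\tau)$ term by $\theta$-independence, reduction of the trajectory score to $\sum_t \nabla_\theta \log \pi_\theta(a_t\mid s_t)$, and the standard zero-mean baseline argument. Your added remarks on dominated convergence and on the irrelevance of the specific $0/\alpha/1$ values are sound but do not change the substance of the argument.
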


\begin{proof}
\textbf{Step 1: Setup.}
The extended trajectory $\tilde{\tau} = (\tau, \xi)$ includes both the policy-generated trajectory $\tau = (s_0, a_0, s_1, a_1, ..., s_T)$ and the post-action randomness $\xi$. The joint probability decomposes as:
\begin{equation}
p_\theta(\tilde{\tau}) = p(s_0) \prod_{t=0}^{T-1} \pi_\theta(a_t \mid s_t) \, P(s_{t+1} \mid s_t, a_t) \cdot p(\xi \mid \tau),
\end{equation}
where $p(s_0)$ is the initial state distribution, $P$ is the environment transition kernel, and $p(\xi \mid \tau)$ is the distribution over post-action randomness given the trajectory; by assumption, $p(\xi \mid \tau)$ is $\theta$-independent.

\textbf{Step 2: Policy Gradient Theorem.}
For $R_{\text{tier}}(\tilde{\tau})$,
\begin{align}
\nabla_\theta J(\theta) &= \nabla_\theta \int p_\theta(\tau) \, p(\xi \mid \tau) \, R_{\text{tier}}(\tilde{\tau}) \, d\xi \, d\tau \\
&= \int p_\theta(\tau) \, p(\xi \mid \tau) \, R_{\text{tier}}(\tilde{\tau}) \, \nabla_\theta \log p_\theta(\tau) \, d\xi \, d\tau \\
&= \int p_\theta(\tilde{\tau}) \, R_{\text{tier}}(\tilde{\tau}) \, \nabla_\theta \log p_\theta(\tau) \, d\tilde{\tau},
\end{align}
using that $\nabla_\theta \log p_\theta(\tilde{\tau}) = \nabla_\theta \log p_\theta(\tau) + \nabla_\theta \log p(\xi\mid\tau)$ and $\nabla_\theta \log p(\xi\mid\tau)=0$ by $\theta$-independence. Since $p(s_0)$ and $P$ are $\theta$-independent,
\begin{equation}
\nabla_\theta \log p_\theta(\tau) = \sum_{t=0}^{T-1} \nabla_\theta \log \pi_\theta(a_t \mid s_t),
\end{equation}
hence
\begin{equation}
\nabla_\theta J(\theta) = \mathbb{E}_{\tilde{\tau} \sim p_\theta} \left[ \sum_{t=0}^{T-1} \nabla_\theta \log \pi_\theta(a_t \mid s_t) \cdot R_{\text{tier}}(\tilde{\tau}) \right].
\label{eq:pg_stochastic}
\end{equation}

\textbf{Step 3: Baseline Subtraction.}
For any $b_t(s_t)$ not depending on $a_t$,
\begin{align}
&\mathbb{E}_{\tilde{\tau} \sim p_\theta} \left[ \sum_{t=0}^{T-1} \nabla_\theta \log \pi_\theta(a_t \mid s_t) \cdot b_t(s_t) \right] \\
&= \sum_{t=0}^{T-1} \mathbb{E}_{s_t} \left[ b_t(s_t) \cdot \mathbb{E}_{a_t \sim \pi_\theta(\cdot \mid s_t)} \left[ \nabla_\theta \log \pi_\theta(a_t \mid s_t) \right] \right] = 0,
\end{align}
so
\begin{equation}
\mathbb{E}_{\tilde{\tau} \sim p_\theta} \left[ \sum_{t=0}^{T-1} \nabla_\theta \log \pi_\theta(a_t \mid s_t) \cdot (R_{\text{tier}}(\tilde{\tau}) - b_t(s_t)) \right] = \nabla_\theta J(\theta).
\label{eq:pg_baseline_stochastic}
\end{equation}
\end{proof}

\begin{remark}[If the reasoner is $\theta$-dependent]
If, instead, $p(\xi\mid\tau)$ depends on $\theta$ (e.g., shared trunk),
then
\[
\nabla_\theta\log p_\theta(\tilde{\tau})
=\sum_t \nabla_\theta\log\pi_\theta(a_t\mid s_t)\;+\;\nabla_\theta\log p_\theta(\xi\mid\tau),
\]
and an unbiased estimator must add the extra score term
$\nabla_\theta\log p_\theta(\xi\mid\tau)$ multiplied by the same return.
Alternatively, one may stop gradients through the reasoner or generate $\xi$ using frozen modules to enforce $\theta$-independence.
\end{remark}

\begin{proposition}[Unbiased gradient with $\theta$-dependent reasoner]
If $p_\theta(\xi\mid\tau)$ depends on $\theta$, then
\[
\nabla_\theta J(\theta)
=\mathbb{E}\!\left[\left(\sum_{t}\nabla_\theta\log\pi_\theta(a_t\mid s_t)
+\nabla_\theta\log p_\theta(\xi\mid\tau)\right)\,R_{\text{tier}}(\tilde{\tau})\right],
\]
so the unbiased score-function estimator must include both terms (each may use an appropriate baseline that is independent of the respective sampled variable).
\end{proposition}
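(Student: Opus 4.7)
The plan is to repeat the score-function derivation of Theorem~\ref{thm:unbiased}, but without discarding the term $\nabla_\theta \log p_\theta(\xi\mid\tau)$, which is no longer zero. Concretely, I would start from
\[
J(\theta)=\int p_\theta(\tau)\,p_\theta(\xi\mid\tau)\,R_{\text{tier}}(\tilde\tau)\,d\tilde\tau,
\]
apply the log-derivative identity $\nabla_\theta p_\theta(\tilde\tau) = p_\theta(\tilde\tau)\,\nabla_\theta\log p_\theta(\tilde\tau)$ under the standard interchange-of-differentiation-and-integration assumptions (dominated convergence plus bounded reward $R_{\text{tier}}\in[0,1]$), and then use the factorization $p_\theta(\tilde\tau)=p_\theta(\tau)\,p_\theta(\xi\mid\tau)$ to split the score into two additive pieces.

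Next I would expand $\log p_\theta(\tau)=\log p(s_0)+\sum_t\log\pi_\theta(a_t\mid s_t)+\sum_t\log P(s_{t+1}\mid s_t,a_t)$ and observe that, as in the original proof, the $\theta$-independent pieces drop out, leaving $\sum_t \nabla_\theta\log\pi_\theta(a_t\mid s_t)$. The second piece, $\nabla_\theta\log p_\theta(\xi\mid\tau)$, now survives because by hypothesis $p_\theta(\xi\mid\tau)$ depends on $\theta$. Summing the two contributions and reassembling the expectation yields the stated identity.

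For the baseline remark, the argument is the usual conditional-mean-zero property of a score. For any function $b_t(s_t)$ independent of $a_t$, one has $\mathbb{E}_{a_t\sim\pi_\theta(\cdot\mid s_t)}[\nabla_\theta\log\pi_\theta(a_t\mid s_t)]=0$ by differentiating $\int\pi_\theta(a\mid s_t)\,da=1$, and similarly for any $\tilde b(\tau)$ independent of $\xi$ one has $\mathbb{E}_{\xi\sim p_\theta(\cdot\mid\tau)}[\nabla_\theta\log p_\theta(\xi\mid\tau)]=0$. Subtracting such baselines from the corresponding score-times-return term therefore preserves unbiasedness, exactly as in Step 3 of Theorem~\ref{thm:unbiased}.

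The only genuinely delicate point is the interchange of $\nabla_\theta$ with the integral over $\tilde\tau$ when $\xi$ lives in a high-dimensional, potentially discrete token space: one needs a mild regularity condition on $p_\theta(\xi\mid\tau)$ (e.g., differentiability in $\theta$ and integrable dominating function), which is standard for softmax policies. Everything else is bookkeeping, so I do not anticipate a real obstacle beyond stating these regularity assumptions explicitly.
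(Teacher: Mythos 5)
Your proposal is correct and follows essentially the same route as the paper, which states this proposition as an immediate consequence of the score decomposition $\nabla_\theta\log p_\theta(\tilde{\tau})=\sum_t\nabla_\theta\log\pi_\theta(a_t\mid s_t)+\nabla_\theta\log p_\theta(\xi\mid\tau)$ already laid out in the preceding remark and in Steps 1--3 of Theorem~\ref{thm:unbiased}, simply retaining the now-nonzero second score term. Your added notes on the interchange of differentiation and integration and on the conditional-mean-zero baseline property for each score are sound refinements of details the paper leaves implicit.
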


\begin{corollary}
Under the $\theta$-independence of $p(\xi\mid\tau)$, the three-tier reward preserves the unbiasedness of the REINFORCE estimator for $\nabla_\theta J(\theta)$.
Algorithms such as PPO, GRPO, and BNPO, which optimize clipped or normalized surrogate objectives, remain applicable with this reward; however, their gradient estimates are generally biased (by design) and converge to stationary points of their respective surrogate objectives rather than guaranteeing an unbiased gradient of $J(\theta)$.
\end{corollary}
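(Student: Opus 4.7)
The plan is to treat the corollary's two assertions separately: the unbiasedness claim is an immediate consequence of Theorem~\ref{thm:unbiased}, while the surrogate-bias claim requires a short, algorithm-by-algorithm argument. For the first assertion, I would apply Theorem~\ref{thm:unbiased} with reward $R_{\text{tier}}$ and an arbitrary action-independent baseline $b_t(s_t)$. The corollary's hypothesis (the $\theta$-independence of $p(\xi\mid\tau)$) is precisely the condition invoked in the theorem to drop $\nabla_\theta \log p(\xi\mid\tau)$, so no further work is needed: the REINFORCE estimator is unbiased for $\nabla_\theta J(\theta)$ with the three-tier reward.

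For the second assertion, I would first note that each of PPO, GRPO, and BNPO processes trajectories through a scalar return, so substituting $R_{\text{tier}}\in\{0,\alpha,1\}$ is mechanically valid and the algorithms remain well-defined. To locate the bias I would walk through each method. For PPO, the clipped surrogate $\min(r_t(\theta) A_t,\operatorname{clip}(r_t(\theta),1-\epsilon,1+\epsilon)A_t)$ is a nonsmooth truncation of $r_t(\theta) A_t$; its expectation does not equal $\mathbb{E}[r_t(\theta) A_t]$, so its gradient differs from $\nabla_\theta J$. For GRPO and BNPO, the per-group normalization (a z-score for GRPO, a Beta-fit standardization for BNPO) introduces a group-dependent denominator that is treated as a stop-gradient constant in the update; this yields a well-defined surrogate whose gradient again differs from $\nabla_\theta J$ because the normalizer depends on the sample returns and does not cancel under the score-function identity.

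To conclude the convergence-to-stationary-points claim, I would invoke the standard stochastic-approximation argument: since $R_{\text{tier}}$ is bounded and each surrogate $\mathcal{L}_{\text{surr}}(\theta)$ is locally Lipschitz with stop-gradient normalizers, the mini-batch gradient is an unbiased estimator of $\nabla_\theta \mathcal{L}_{\text{surr}}(\theta)$ (not of $\nabla_\theta J(\theta)$), and under standard step-size conditions SGD converges to a stationary point of $\mathcal{L}_{\text{surr}}$. The main obstacle I anticipate is making the BNPO case precise: because the Beta parameters $(\alpha_\beta,\beta_\beta)$ are refit per group from the sampled returns and then held fixed for $K$ inner gradient steps, one must explicitly model them as a deterministic, $\theta$-independent function of the observed sample when taking expectations, so that the surrogate admits a well-defined gradient. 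Once this bookkeeping is in place, the claim reduces to the textbook statement that optimizing a biased surrogate guarantees stationarity only with respect to that surrogate.
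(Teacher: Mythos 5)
Your proposal is correct and takes essentially the same route as the paper: the unbiasedness assertion is an immediate application of Theorem~\ref{thm:unbiased} under the stated $\theta$-independence hypothesis, which is all the paper itself relies on. The additional detail you supply for the second assertion (locating the bias in the clipping and in the stop-gradient group normalizers, and the stochastic-approximation argument for stationarity of the surrogate) goes beyond the paper, which simply asserts this as a known property of PPO/GRPO/BNPO, but your sketch is standard and consistent with the intended reading.
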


\section{Generation Diversity During Training}
\label{app:diversity}

An interesting emergent property of AC-RL training is that it maintains greater generation diversity compared to standard binary-reward RL. Figure~\ref{fig:diversity} tracks the fraction of training batches where all $M$ generated captions receive identical rewards (zero standard deviation), which serves as an indicator of diversity collapse.

\begin{figure}[h]
    \centering
    \includegraphics[width=0.67\linewidth]{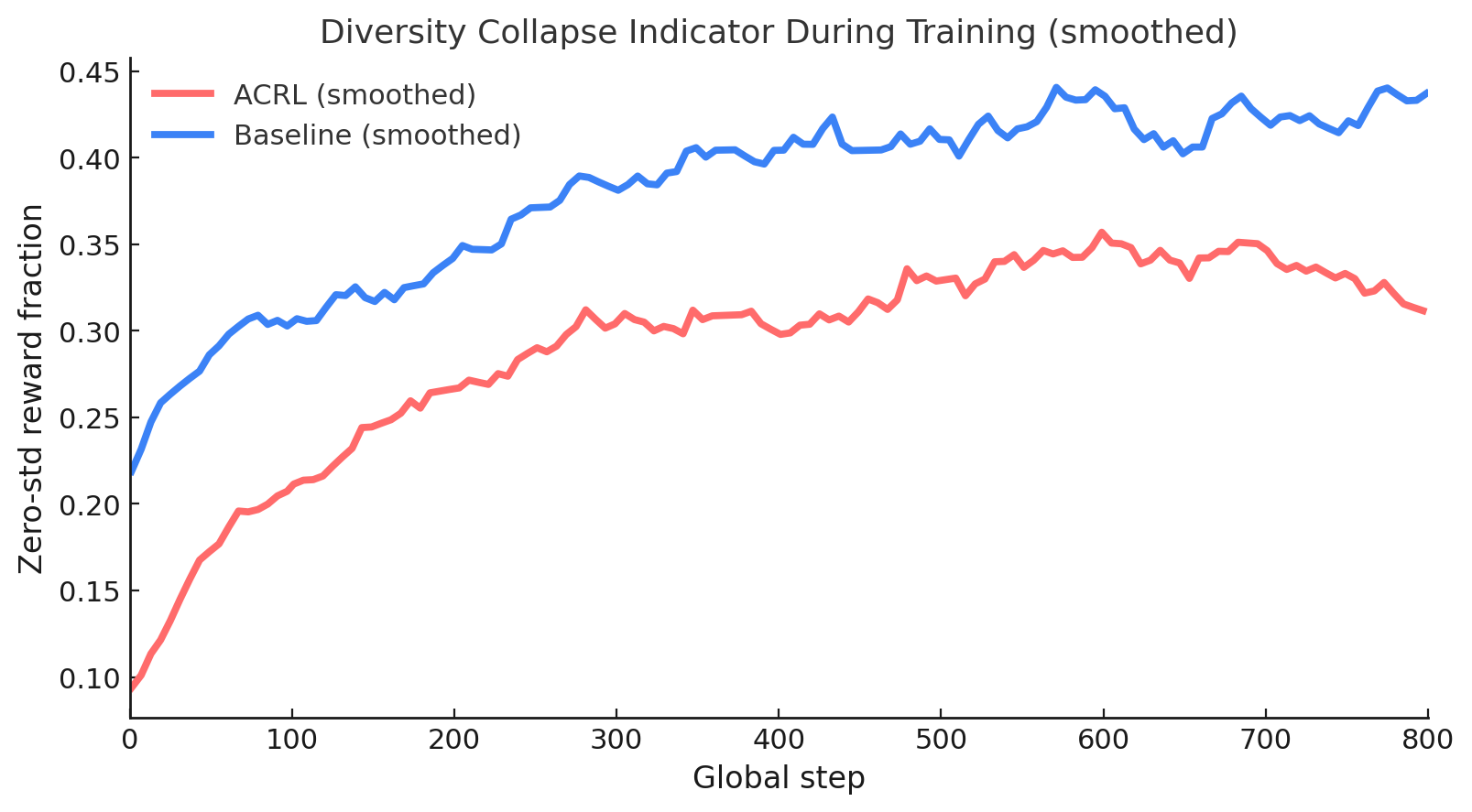}
    \caption{Fraction of uniform-reward batches during training. AC-RL (red) maintains lower values than standard RL (blue), indicating more diverse caption generation throughout training. Both methods show increasing trends as policies converge, but AC-RL's tiered reward structure preserves more exploration.}
    \label{fig:diversity}
\end{figure}

Both methods show an upward trend as entropy naturally decreases during policy optimization. However, AC-RL consistently maintains a lower fraction of uniform-reward batches (approximately 0.31 vs 0.42 at convergence). This difference likely stems from the tiered reward structure: while standard RL only distinguishes between success and failure, AC-RL's intermediate reward ($\alpha=0.7$) creates a richer gradient landscape that encourages the model to explore different captioning strategies.
\section{Prompt Templates}
\label{app:prompts}

We present the complete set of prompts used in our AC-RL framework. The prompts are structured to maintain clear role separation between the captioner (visual description) and reasoner (problem-solving), while enabling controlled interaction during training.

\subsection{Vision-Language Model Prompts}

\paragraph{Initial Caption Generation.}
The following prompt instructs the VLM to generate comprehensive visual descriptions without solving the problem:

\begin{tcolorbox}[colback=gray!5!white,colframe=gray!75!black,title=\texttt{vlm\_initial\_description\_prompt}]
\small
\texttt{I need your help analyzing this image to prepare for answering the following question:}

\texttt{\{question\}}

\texttt{IMPORTANT: DO NOT answer the question directly. Instead, provide a comprehensive and detailed description of everything visible in the image that could be relevant for answering this question.}

\texttt{Focus on describing:}
\begin{itemize}
\item \texttt{All objects, people, text, and visual elements in the image}
\item \texttt{Spatial relationships between different elements}
\item \texttt{Any text content that is visible, transcribed exactly}
\item \texttt{Colors, shapes, patterns, and visual attributes}
\item \texttt{Relevant contextual details and background information}
\end{itemize}

\texttt{Your description should be detailed enough that someone could mentally reconstruct the image without seeing it, but DO NOT provide step-by-step instructions on how to recreate it.}
\end{tcolorbox}

\paragraph{Clarification Response.}
When the reasoner requests specific visual information, the frozen reference model uses this prompt:

\begin{tcolorbox}[colback=gray!5!white,colframe=gray!75!black,title=\texttt{vlm\_focused\_description\_prompt}]
\small
\texttt{Original Question: \{question\}}

\texttt{Previous Description: \{previous\_descriptions\}}

\texttt{CONTEXT: The description above was provided for this image, but some details might be missing or unclear. We are asking this specific follow-up question to gather additional visual details.}

\texttt{Your specific task: \{focus\_request\}}

\texttt{CRITICAL INSTRUCTIONS:}
\begin{itemize}
\item \texttt{You are a VISUAL DESCRIBER only - DO NOT attempt to answer the original question}
\item \texttt{DO NOT solve the problem or provide calculations}
\item \texttt{DO NOT give step-by-step solutions or reasoning}
\item \texttt{ONLY describe what you can see in the image that relates to the specific request}
\item \texttt{Focus solely on visual elements: objects, text, numbers, shapes, spatial relationships}
\item \texttt{If asked about measurements, describe what you see but don't calculate or solve}
\item \texttt{If asked about equations, transcribe what's visible but don't solve them}
\item \texttt{Be thorough and precise in your description since this is to clarify specific missing details}
\end{itemize}
\end{tcolorbox}

\newpage
\subsection{Reasoner Prompts}

\paragraph{Adaptive Decision Mechanism.}
The reasoner evaluates whether the initial caption is sufficient or requires clarification:

\begin{tcolorbox}[colback=gray!5!white,colframe=gray!75!black,title=\texttt{reasoner\_adaptive\_decision\_prompt}]
\small
\texttt{You are an expert visual reasoning assistant. Your task is to analyze the given image description and decide if you can solve the problem directly or if you need one specific piece of additional visual information.}

\texttt{Image Description: \{description\}}

\texttt{Question: \{question\}}

\texttt{ANALYSIS INSTRUCTIONS:}
\begin{enumerate}
\item \texttt{\textbf{CAREFUL EVALUATION}: Analyze if the description contains all specific visual details needed to solve completely and accurately.}
\item \texttt{\textbf{BE CONSERVATIVE}: If missing ANY crucial visual detail, request MORE information rather than guess.}
\item \texttt{\textbf{ONE CLARIFICATION ONLY}: You can request specific additional visual information if needed.}
\item \texttt{\textbf{DECISION CRITERIA}:}
\begin{itemize}
\item \texttt{If you have ALL visual details needed: Status = SOLVED}
\item \texttt{If missing crucial visual information: Status = NEED\_MORE\_INFO}
\end{itemize}
\item \texttt{\textbf{AVOID ASSUMPTIONS}: Don't guess numbers, assume "typical" values, or fill in missing details.}
\end{enumerate}

\texttt{CRITICAL PRINCIPLES:}
\begin{itemize}
\item \texttt{\textbf{BE SPECIFIC in requests}: Ask for exact details you need}
\item \texttt{\textbf{SOLVE CONFIDENTLY when possible}: If you have enough information, provide the complete solution}
\item \texttt{\textbf{REQUEST STRATEGICALLY}: Make your one request count - ask for the most crucial missing details}
\end{itemize}

\texttt{OUTPUT FORMAT (all fields required):}\\
\texttt{Reasoning: [Your detailed analysis of what information you have and what might be missing]}\\
\texttt{Status: [SOLVED or NEED\_MORE\_INFO]}\\
\texttt{Answer: [Your complete final answer if Status is SOLVED - use $\backslash$boxed\{answer\} format, otherwise N/A]}\\
\texttt{Request: [Your specific request for additional visual information if Status is NEED\_MORE\_INFO, otherwise N/A]}
\end{tcolorbox}

\newpage
\paragraph{Final Answer Generation.}
For both direct solving and post-clarification scenarios:

\begin{tcolorbox}[colback=gray!5!white,colframe=gray!75!black,title=\texttt{reasoner\_final\_prompt}]
\small
\texttt{You are an expert mathematical reasoning assistant. Based on the complete image description below, please solve the mathematical problem step-by-step.}

\texttt{Complete Image Description: \{description\}}

\texttt{Question: \{question\}}

\texttt{INSTRUCTIONS:}
\begin{enumerate}
\item \texttt{Analyze the complete image description carefully}
\item \texttt{Work through the problem step-by-step with clear mathematical reasoning}
\item \texttt{Show all calculations and logical steps}
\item \texttt{Provide your final answer in the required format}
\item \texttt{Use $\backslash$boxed\{answer\} notation. For multiple choice, use $\backslash$boxed\{letter\} format}
\end{enumerate}

\texttt{You MUST follow this format:}\\
\texttt{<think>}\\
\texttt{Your detailed reasoning and thought process here...}\\
\texttt{</think>}\\
\texttt{<answer> Final Answer: your final answer here </answer>}
\end{tcolorbox}

\section{Training Hyperparameters}
\label{app:hyperparameters}

We provide complete hyperparameter specifications to ensure reproducibility. All experiments use the same random seed for dataset sampling to enable fair comparisons.

\begin{table}[h]
\centering
\caption{Hyperparameter settings for AC-RL training across different model sizes.}
\vspace{0.25cm}
\begin{tabular}{@{}lcc@{}}
\toprule
\textbf{Hyperparameter} & \textbf{2B Models} & \textbf{3B Models} \\
\midrule
\multicolumn{3}{@{}l}{\textit{Optimization}} \\
Learning rate & $3 \times 10^{-6}$ & $2 \times 10^{-6}$ \\
Effective batch size & 256 & 256 \\
KL divergence weight ($\beta$) & 0.001 & 0.001 \\
\midrule
\multicolumn{3}{@{}l}{\textit{LoRA Configuration}} \\
LoRA rank ($r$) & 128 & 256 \\
LoRA alpha ($\alpha$) & 256 & 512 \\
LoRA dropout & 0.05 & 0.05 \\
\midrule
\multicolumn{3}{@{}l}{\textit{BNPO Settings}} \\
Group size ($M$) & 8 & 8 \\
Number of iterations & 6 & 6 \\
Remaining parameters & \multicolumn{2}{c}{TRL library defaults} \\
\midrule
\multicolumn{3}{@{}l}{\textit{Generation Parameters}} \\
Captioner temperature & \multicolumn{2}{c}{1.0} \\
Captioner max tokens & \multicolumn{2}{c}{800} \\
Reasoner temperature & \multicolumn{2}{c}{0.6} \\
Reasoner top-$p$ & \multicolumn{2}{c}{0.95} \\
Reasoner max tokens & \multicolumn{2}{c}{100,000} \\
\midrule
\multicolumn{3}{@{}l}{\textit{Reward Configuration}} \\
Clarification penalty ($1-\alpha$) & \multicolumn{2}{c}{0.3} \\
\bottomrule
\end{tabular}
\end{table}

\paragraph{Implementation Details.}
We implement AC-RL using the Transformers Reinforcement Learning (TRL) library~\cite{TRL} with Beta-Normalization Policy Optimization (BNPO). The LoRA~\cite{lora} adapters are applied to all linear layers in the vision-language models. Training typically converges within 1,000 steps on the ViRL-39K dataset. All experiments use mixed precision training (fp16).

\paragraph{Computational Requirements.}
Training a 3B parameter captioner with AC-RL requires approximately 50 hours on 8 NVIDIA A6000-Ada GPUs. The 2B models require 40 hours on the same hardware configuration. The reasoner is hosted on a 4$\times$ AMD MI250X node, though it is never fully saturated during training.

\section{LLM Usage Statement}
\label{app:llm_usage}
We used large language models for grammatical corrections and rewording suggestions to improve clarity. All research ideas, experimental design, analysis, and scientific contributions are the original work of the authors. LLMs were not used for generating research content or results interpretation.
\end{document}